\title{Be Considerate: Objectives, Side Effects, and Deciding How to Act}
\author{Parand Alizadeh Alamdari, Toryn Q. Klassen, Rodrigo Toro Icarte,
Sheila A. McIlraith$^\dagger$ \\
Department of Computer Science, University of Toronto, Toronto, Canada\\
Vector Institute, Toronto, Canada\\
$^\dagger$~Schwartz Reisman Institute for Technology and Society, Toronto, Canada\\
\{parand,toryn,rntoro,sheila\}@cs.toronto.edu}
\newcommand{\hideit}[1]{}
\newcommand{\caring}{\text{caring}}
\newcommand{\Caring}{\text{Caring}}
\newcommand{\craft}{\text{Craft}}
\newcommand{\expect}{\mathbb{E}}
\newcommand{\tuple}[1]{\langle{#1}\rangle}
\newtheorem{prop}{Proposition}
\theoremstyle{definition}
\newtheorem{definition}{Definition}
\newcommand{\initset}{\mathcal{I}}
\newcommand{\Pnext}{\mathsf{P}}
\newcommand{\rnextv}{r_\mathsf{value}}
\newcommand{\rnextnegv}{r_\mathsf{neg}}
\newcommand{\rnexto}{r_\mathsf{option}}
\newcommand{\rnextvp}{r_\mathsf{value}'}
\newcommand{\rnextminv}{r_\mathsf{min}}
\newcommand{\rnextworstv}{r_\mathsf{worst}}
\newcommand{\rnexts}{r_\mathsf{sim}}
\begin{document}


\maketitle

\begin{abstract}
Recent work in AI safety has highlighted that in sequential decision making, objectives are often underspecified or incomplete. This gives discretion to the acting agent to realize the stated objective in ways that may result in undesirable outcomes. We contend that to learn to act safely, a reinforcement learning (RL) agent should include contemplation of the impact of its actions on the wellbeing and agency of others in the environment, including other acting agents and reactive processes. We endow RL agents with the ability to contemplate such impact by augmenting their reward based on expectation of future return by others in the environment, providing different criteria for characterizing impact. We further endow these agents with the ability to differentially factor this impact into their decision making, manifesting behavior that ranges from self-centred to self-less, as demonstrated by experiments in gridworld environments.
\end{abstract}

\section{Introduction}\label{sec:introduction}

Robust and reliable sequential decision making, 
whether it is realized via reinforcement learning (RL), supervised learning, or some form of probabilistic or otherwise symbolic planning using models---relies on the specification of an objective -- a reward function to be optimized in the case of RL, or a goal to achieve in the case of symbolic planning. 
Recent work in AI safety has raised the concern that objective specifications are often underspecified or incomplete, neglecting to take into account potential undesirable (negative) side effects of achievement of the specified objective. As \cite{Amodei2016concrete} explain, 
``[F]or an agent operating in a large, multifaceted environment, an objective function that focuses on only one aspect of the environment may implicitly express indifference over other aspects of the environment.''  \cite{Russell2019compatible} cites the example of tasking a robot to get coffee from a coffee shop and the robot, in its singular commitment to achieving the stated objective, killing all those in the coffee shop that stood between it and the purchase of coffee. 
A more benign example is that of a robot breaking a vase that is on the optimal path between two points \citep{Krakovna2020avoiding}. 
A range of recent works have presented computational techniques for avoiding or learning to avoid negative side effects in Markov Decision Processes (MDPs) or related models, including for example \citep{Zhang2018minimax,Krakovna2019stepwise,Turner2020conservative,Krakovna2020avoiding,Saisubramanian2020multi}.

Our concern in this paper is with how a Reinforcement Learning (RL) agent can learn to act safely in the face of a potentially incomplete specification of the objective.  Is avoiding negative side effects the answer?  
\citeauthor{Amodei2016concrete} observe that 
\begin{quote}
``avoiding side effects can be seen as a proxy for the things we really care about:  avoiding negative externalities. If everyone likes a side effect, there's no need to avoid it." 
\end{quote}
In the spirit of this observation,   
we contend that to act safely an agent should contemplate the impact of its actions on the wellbeing and agency of others in the environment, including both other acting agents as well as passive or reactive processes such as climate or waterways. 
Indeed, what may be construed as a positive effect for one agent may be a negative effect for another. 

Here we endow RL agents with the ability to 
consider in their learning
the future welfare and continued agency of those that act or react in the environment. This is in contrast to some recent work on side effects that takes into account only how the agent's actions will affect its own future abilities  \cite{Krakovna2019stepwise,Turner2020conservative,Krakovna2020avoiding}. We do so by augmenting the RL agent's reward with an auxilliary reward that reflects different functions of expected future return. In its most general case, we make no assumptions about the number of agents that exist in the environment, their actions, or their transition functions. However we show how individual or groups of agents can be distinguished and differentially considered in learning. We further endow these agents with the ability to control the degree to which impact on self versus others factors into their learning, resulting in behavior that ranges from self-centred to self-less.  Experiments in gridworld environments illustrate qualitative and quantitative properties of the proposed approach.

\section{Preliminaries}
\label{sec:preliminaries}

In this section, we review relevant definitions and notation.

RL agents learn policies from experience. When the problem is fully-observable, it's standard to model the environment as 
a Markov Decision Process (MDP) \citep{Sutton2018textbook}.
We describe an MDP as a tuple $\tuple{S,A,T, r,\gamma}$ where  $S$ is a finite set of states, $A$ is a finite set of actions,  $T(s_{t+1}|s_t,a_t)$ gives the probability of transitioning to state $s_{t+1}$ when taking action $a_t$ in state $s_t$, $r:S\times A\times S\to \mathbb{R}$ is the reward function, and  $\gamma$ is the discount factor.
Sometimes an MDP can also include a designated initial state $s_0\in S$.

When the agent takes an action $a_t\in A$ in a state $s_t\in S$, as result it ends up in a new state $s_{t+1}$ drawn from the distribution $T(s_{t+1}|s_t,a_t)$, and receives the reward $r_{t+1}=r(s_t,a_t,s_{t+1})$.
A \emph{terminal state} in an MDP is a state $s$ which can never be exited -- i.e., $T(s|s,a)=1$ for every action $a$ -- and from which no further reward can be gained -- i.e., $r(s,a,s)=0$ for every action $a$. 

A \emph{policy} is a (possibly stochastic) mapping from states to actions.
Given a policy $\pi$, the \emph{value} of a state $s$ is the \emph{expected return} of that state, that is, the expected sum of (discounted) rewards that the agent will get by following the policy $\pi$ starting in $s$. That can be expressed using the value function $V^\pi$, defined as
$V^\pi(s)=\expect_\pi\left[\sum_{k=0}^\infty \gamma^k \cdot r_{t+k+1} \mid s_t=s\right]$
where the $\expect_\pi$ notation means that in the expectation, the action in each state $s_t,s_{t+1},\dots$ is selected according to the policy $\pi$. 
Note that the discount factor $\gamma$ determines how much less valuable it is to receive rewards in the future instead of the present. An optimal policy will maximize the value of every state. The optimal value function $V^*$ is the value function of an optimal policy.

Similarly to with the value function, a value (called a Q-value) can be associated with a state-action pair: $
    Q^\pi(s,a)=\expect_\pi\left[\sum_{k=0}^\infty \gamma^k\cdot r_{t+k+1} \mid s_t=s,a_t=a\right].$
Note that here, the first action selected is necessarily $a$, but the policy $\pi$ is followed afterwards. 
The optimal Q-function is the Q-function $Q^*$ corresponding to an optimal policy.
Note also that given the optimal Q-function $Q^*$, an optimal policy can be recovered as follows: for each state $s$, choose an action $a$ such that $Q^*(s,a)$ is maximized.

In RL, the agent is tasked with trying to maximize reward, but is not given complete knowledge about the environment's dynamics (i.e., the transition function $T$ and reward function $R$). As a result, the agent has to explore to learn how to get reward. We refer the reader to \citeauthor{Sutton2018textbook}'s \citeyear{Sutton2018textbook} textbook for further details on RL, but note that Q-learning \citep{Watkins1992qlearning} is an RL algorithm that learns a policy by learning approximations of the optimal Q-values (without ever learning the transition dynamics).

{\hideit{
In partially observable problems, the underlying environment model is typically assumed to be a \emph{Partially Observable Markov Decision Process (POMDP)}. A POMDP is a tuple $\mathcal{P_O} = \tuple{S,O,A,r,p,\omega,\gamma}$, where $S$, $A$, $r$, $p$, and $\gamma$ are defined as in an MDP, $O$ is a finite set of \emph{observations}, and $\omega(s,o)$ is the \emph{observation probability distribution}. 
At every time step $t$, the agent is in exactly one state $s_t \in S$, executes an action $a_t \in A$, receives reward $r_{t} = r(s_t,a_t)$, and moves to state $s_{t+1}$ according to $p(s_t,a_t,s_{t+1})$. However, the agent does not observe $s_{t+1}$, but only receives an observation $o_{t+1} \in O$. This observation provides the agent a clue about what the state $s_{t+1} \in S$ is via $\omega$. In particular, $\omega(s_{t+1},o_{t+1})$ is the probability of observing $o_{t+1}$ from state $s_{t+1}$ \cite{cassandra1994acting}.
}}

{\hideit{
RL methods cannot be immediately applied to POMDPs because the transition probabilities and reward function are not necessarily Markovian w.r.t.\ $O$ (though by definition they are w.r.t. $S$). As such, optimal policies may need to consider the complete history $o_0, a_0, \dots, a_{t-1}, o_t$ of observations and actions when selecting the next action. 
Several partially observable RL methods use a recurrent neural network to compactly represent the history, and then use a policy gradient method to train it. 
However, when we do have access to a full POMDP model $\mathcal{P_O}$, then the history can be summarized into a \emph{belief state}.
A belief state is a probability distribution $b_t:S \rightarrow [0,1]$ over $S$, such that $b_t(s)$ is the probability that the agent is in state $s \in S$ given the history up to time $t$. The initial belief state is computed using the initial observation $o_0$: $b_0(s) \propto \omega(s,o_0)$ for all $s \in S$. The belief state $b_{t+1}$ is then determined from the previous belief state $b_t$, the executed action $a_t$, and the resulting observation $o_{t+1}$ as $b_{t+1}(s') \propto\omega(s',o_{t+1}) \sum_{s \in S}{p(s,a_t,s')b_t(s)}$ for all $s' \in S$.
%
Since the state transitions and reward function are Markovian w.r.t.\ $b_t$, the set of all belief states $B$ can be used to construct the belief MDP $\mathcal{M}_B$. 
Optimal policies for $\mathcal{M}_B$ are also optimal for the POMDP \cite{cassandra1994acting}.
}}


%
%
%


\section{Approach: Considering others}
\label{sec:approach}
%

In Section \ref{sec:introduction}, we suggested that to act safely an acting agent should contemplate the impact of its actions on the welfare and continued agency of those that act or react in the environment. In this section, we present several formulations that address this aspiration. 
To this end, we conceive an environment which may 
feature various, potentially un-named, agents. We consider a simplified setting in which we assume that first the acting agent acts in isolation, but after its policy leads to a terminal state, other agents can act. Those others are not learning their behavior; rather, they subscribe to some previously learned or acquired policies or behaviors. 
We assume the acting agent must learn how to act in consideration of others, \emph{but} with limited knowledge of others in the environment. In the most general case, the acting agent need not know what agents exist in the environment, nor their objectives, actions or transition systems. Rather we assume a distribution over value functions that serves to capture how the world may purposefully evolve in the future, and we use this as a proxy for evaluating the welfare and continued agency of those that act or react in the environment. 

In what follows, we present several different formulations for augmenting the reward accrued by the acting agent, by considering its effect on others in the environment. In all cases, this accrued reward is some function of the distribution over value functions, and we present several formulations 
that optimize for different criteria. These formulations are consistently paired with  ``\caring'' coefficients, hyperparameters that control the degree to which the acting agent factors into its decision making its effect on others versus itself. This enables the agent's decision making to range from selfish -- oblivious to its impact on others -- all the way to selfless, where it subjugates its own reward in service of the welfare of others.

In Section \ref{sec:experiments} we illustrate qualitative behavior manifest by different argumented rewards, and the quantitative impact of the caring coefficients on agents' reward.


\subsection{Using information about value functions}
We wish to augment our acting (RL) agent's reward with an auxiliary reward that reflects the impact of its choice of actions on future agency and wellbeing of others in the environment. We assume the acting agent has access to a distribution over value functions. In particular,
suppose that we have a finite set $\mathcal{V}$ of possible value functions $V:S\to\mathbb{R}$, and a probability distribution 
$\Pnext(V)$ over that set.
Note that we don't have to commit to how many agents there are (or what their actions are). It could be that each $V\in\mathcal{V}$ corresponds to a different agent, that the set reflects all possible value functions of a unique agent, or anything in between. Also, each $V\in\mathcal{V}$ could reflect some aggregation of the value functions of all or some of the agents.
 
We can then modify the acting agent's reward function $r_1$ to incorporate an auxiliary reward, yielding the new reward function $\rnextv$ below:
\begin{align}\label{eq:rvalue}
    \rnextv(s,a,s')=\begin{cases}
    \alpha_1\cdot r_1(s,a,s')&\text{if $s'$ is not terminal}\\
    \alpha_1\cdot r_1(s,a,s')+ \gamma\cdot \alpha_2\cdot\displaystyle\sum_{V\in\mathcal{V}} \Pnext(V)\cdot V(s')&\text{if $s'$ is terminal}
    \end{cases}
\end{align}
Note that using this reward with an RL agent requires being able to identify a terminal state $s'$ when the agent's in it.

\paragraph{\Caring~coefficients:}The hyperparameters $\alpha_1$ and $\alpha_2$ are real numbers that determine the degree to which the aggregate reward, $\rnextv(s,a,s')$, is informed by the reward of the acting agent (via $\alpha_1$) and the discounted future return of the environment (via $\alpha_2$). 
As we will see in Section \ref{sec:experiments}, if $\alpha_1=1$ and $\alpha_2 =0$, we just get the original reward function where the acting agent only values its own reward. The agent is oblivious to its impact on others in its environment. If $\alpha_1=0$, then the acting agent entirely ignores any reward it garners directly from its actions.

Note that future activity does not have to start in exactly the same state at which the acting agent ended. $V$ can be defined so that $V(s')$ gives the expected return of future activity considered over a known distribution of starting states, given that the acting agent ended in $s'$.

Observe that the auxiliary reward, $\displaystyle\sum_{V\in\mathcal{V}} \Pnext(V)\cdot V(s')$, can be written as a function of the form $F(\mathcal{V},\Pnext,s')$, and can be thought of as assessing how good it is to terminate in $s'$.  In formulations that follow, we will simply modify $F(\mathcal{V},\Pnext,s')$ to manifest different reward.

Observe that for some possible reward functions for the acting agent and future value functions, the acting agent may have an incentive to avoid terminating states. The acting agent can avoid or delay the penalty for negative future return by not terminating.
This incentive would typically be undesirable. However, it can be shown that under some circumstances, the acting agent's optimal policy will be terminating.
\begin{definition}[terminating policy]
Given an MDP with terminal states, a policy $\pi$ is \emph{terminating} if the probability of eventually reaching a terminal state by following $\pi$ from any state is 1.
\end{definition}
The following proposition and proof are similar to \cite[Theorem 1]{Illanes2020symbolic}.

\begin{prop}Let $M=\tuple{S,A,T,r_1,\gamma}$ be an MDP where $\gamma=1$, the reward function $r_1$ is negative everywhere, and there exists a terminating policy. Suppose $\rnextv$ is the reward function constructed from $r_1$ according to \autoref{eq:rvalue}, using some distribution $\Pnext(V)$. Then any optimal policy for the MDP $M'=\tuple{S,A,T,\rnextv,\gamma}$ with the modified reward will terminate with probability 1.
\end{prop}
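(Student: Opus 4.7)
The plan is to proceed by contradiction: assume that some optimal policy $\pi^*$ for $M'$ does not terminate with probability $1$, and then show that its value is $-\infty$ at some state, which will contradict the existence of the terminating policy postulated in the hypothesis.

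First I would extract two quantitative consequences from the finiteness of $S$ and $A$ (inherited from the MDP definition). Since $r_1$ is strictly negative on the finite domain $S\times A\times S$, there exists $\varepsilon>0$ with $r_1(s,a,s')\le -\varepsilon$ everywhere, and some $M$ with $|r_1(s,a,s')|\le M$. Likewise, since $\mathcal{V}$ is finite and each $V$ is a function on the finite set $S$, the auxiliary bonus $F(\mathcal{V},\Pnext,s')=\sum_{V\in\mathcal{V}}\Pnext(V)\cdot V(s')$ is bounded in absolute value by some constant $B$.

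Second, I would establish that the terminating policy $\pi_T$ guaranteed by the hypothesis has finite value $V^{\pi_T}_{M'}(s)$ at every state $s$. The key step is the standard fact about finite Markov chains: if under $\pi_T$ the set of terminal states is reached with probability $1$ from every state, then the expected hitting time $\mathbb{E}_{\pi_T}[\tau \mid s_0=s]$ is finite (the non-terminal states behave as transient states of a finite absorbing chain, giving a geometric tail on $\tau$). Combining this with the bound $|r_1|\le M$ and the bounded one-shot bonus $|F|\le B$, we get $|V^{\pi_T}_{M'}(s)|\le |\alpha_1|\cdot M\cdot \mathbb{E}_{\pi_T}[\tau\mid s_0=s]+|\alpha_2|\cdot B<\infty$ (using $\gamma=1$).

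Third, I would tackle the contradiction. If $\pi^*$ is not terminating, then there is a state $s$ from which, with probability $p>0$, the trajectory under $\pi^*$ never hits a terminal state. Conditioned on that event, by the definition of $\rnextv$ (which only adds the bonus on the terminal transition) and by $\gamma=1$, the return is $\sum_{t=0}^{\infty}\alpha_1\cdot r_1(s_t,a_t,s_{t+1})\le -\alpha_1\varepsilon\cdot\sum_{t=0}^{\infty}1=-\infty$ (this requires $\alpha_1>0$, which is implicit in treating $r_1$ as a genuine reward; I would flag this mild assumption). Using monotone convergence, which applies cleanly because the per-step rewards are all of the same sign, the expected return under $\pi^*$ starting at $s$ is $-\infty$. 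Since $V^{\pi_T}_{M'}(s)>-\infty$ by the previous step, this contradicts the optimality of $\pi^*$, so $\pi^*$ must terminate with probability $1$.

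The main obstacle is the rigorous handling of the infinite-horizon undiscounted return: one must justify exchanging limit and expectation and must invoke the finite-expected-hitting-time lemma for the terminating policy. Both are routine but worth stating explicitly; everything else is just bookkeeping with the bounds $\varepsilon$, $M$, and $B$.
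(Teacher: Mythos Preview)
Your proof is correct and follows the same contradiction strategy as the paper: a non-terminating optimal policy would have value $-\infty$ at some state, while the guaranteed terminating policy has finite value everywhere, contradicting optimality. Your version is more detailed---supplying explicit bounds and the finite-expected-hitting-time argument where the paper simply asserts the conclusions---and you rightly flag the implicit assumption $\alpha_1>0$, which the paper's argument also requires but does not state.
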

\begin{proof}
Suppose for contradiction that there is an optimal policy $\pi^*$ for $M'$ that is non-terminating. Then there is some state $s\in S$ so that the probability of reaching a terminal state from $s$ by following $\pi^*$ is some value $c<1$. Since rewards are negative everywhere, that means that $V^{\pi^*}(s)=-\infty$. On the other hand, any terminating policy gives a finite value to each state. Since there is a terminating policy for $M$ there is one for $M'$, and so $\pi^*$ cannot be an optimal policy.
\end{proof}

\subsubsection{Considering the worst case}

Rewarding the acting agent based on the expected future return isn't the only option. An alternative is to give the acting agent the ``worst case'' future return, i.e., the return corresponding to the value function (with non-zero probability) that gives the least value to $s'$, i.e., $F(\mathcal{V},\Pnext,s')=\min_{V\in\mathcal{V}:\Pnext(V)>0} V(s')$, and so
\begin{align}\label{eq:secondminreturn}
    \rnextworstv(s,a,s')=\begin{cases}
    \alpha_1\cdot r_1(s,a,s')&\text{if $s'$ is not terminal}\\
    \alpha_1\cdot r_1(s,a,s')+ \gamma\cdot \alpha_2\cdot\left(\displaystyle\min_{V\in\mathcal{V}:\Pnext(V)>0} V(s')\right)&\text{if $s'$ is terminal}
    \end{cases}
\end{align}
Using $\rnextworstv$ encourages the acting agent to maximize the worst case future return. This may be desirable for avoiding rare disastrous outcomes.

\subsubsection{Avoiding negative change}

The augmented reward functions we've considered so far don't just encourage the acting agent to avoid harming others, but they can encourage the acting agent to actively help others. We contrast this with the aspiration of avoiding negative side effects as 
explored by \cite{Krakovna2019stepwise,Krakovna2020avoiding} and others. In their work, they investigate various ``baselines'' against which the effects of the acting agent's actions can be compared. 

Here we just consider one of the simplest baselines -- a comparison with the initial state. Suppose that there is a unique initial state $s_0$. We propose augmenting the acting agent's reward function with $F(\mathcal{V},\Pnext,s')=\sum_{V\in\mathcal{V}}  \Pnext(V)\cdot \min( V(s'),V(s_0))$.
The ideas is to decrease the acting agent's reward when it decreases the expected future return, but to \emph{not} increase the acting agent's reward for increasing that same expected return. 
Formally, the modified reward function would be 
\begin{align}\label{eq:rneg}
    \rnextnegv(s,a,s')=\begin{cases}
    \alpha_1\cdot r_1(s,a,s')&\text{if $s'$ is not terminal}\\
    \alpha_1\cdot r_1(s,a,s')+ \gamma\cdot \alpha_2\cdot\displaystyle\sum_{V\in\mathcal{V}}  \Pnext(V)\cdot \min( V(s'),V(s_0))&\text{if $s'$ is terminal}
    \end{cases}
\end{align}

To interpret this, suppose that $\alpha_2>0$. Note that the auxiliary reward in a terminal state $s'$ depends on $\min(V(s'),V(s_0))$ for each value function $V$ (that has positive probability). So if $V(s')<V(s_0)$, that decreases the acting agent's reward, but if $V(s')>V(s_0)$, that is no better for the acting agent than if $V(s')=V(s_0)$.
Intuitively, this seems related to the idea of avoiding negative side effects (though the optimal solution for the acting agent might still involve causing negative side effects and doing just enough good to counterbalance them).

\subsection{Treating agents differently}

To this point, we've utilized a distribution over value functions to capture the expected return on future behavior within the environment. The distribution has made no commitments to the existence of individual agents, their actions or transition functions. Here we assume that we additionally have indices, $i=1,\dots,n$, corresponding to different agents (we will assume the acting agent is agent 1).

Furthermore, for each agent $i$, we suppose we have a finite set of possible value functions $\{V_{1}^{(i)},V_{2}^{(i)},\dots\}$, and $\Pnext(V_{ij})$ is the probability that $V_{j}^{(i)}$ is the real value function for agent $i$.
We could then have a separate \caring~coefficient $\alpha_i$ for each agent $i$, and define the following reward function for the acting agent:
\begin{align}\label{eq:rvaluei}
    \rnextvp(s,a,s')=\begin{cases}\alpha_1\cdot r_1(s,a,s')&\text{if $s'$ is not terminal}\\
    \alpha_1\cdot r_1(s,a,s')+ \gamma\displaystyle\sum_{i}\alpha_i\sum_j \Pnext(V_{ij})\cdot V_{j}^{(i)}(s')&\text{if $s'$ is terminal}
    \end{cases}
\end{align}

Considering individual agents raises the possibility of giving the acting agent reward at the end based not on the expected sum of returns of the other agents (which was what we just saw above), but by incorporating some notion of ``fairness''.
For example, we could consider the expected return of the agent who would be worst-off.
\begin{align}\label{eq:rawls}
    \rnextminv(s,a,s')=\begin{cases}\alpha_1\cdot r_1(s,a,s')&\text{if $s'$ is not terminal}\\
    \alpha_1\cdot r_1(s,a,s')+ \gamma\displaystyle\min_{i} \alpha_i\sum_{j}  \Pnext(V_{ij})\cdot V_j^{(i)}(s')&\text{if $s'$ is terminal}
    \end{cases}
\end{align}

This is inspired by the maximin (or ``Rawlsian'') social welfare function, which measures social welfare in terms of the utility of the worst-off agent \citep[see, e.g.,][]{Sen1974rawls}.

\paragraph{Using information about options:}Another related formulation that we have explored is to use preexisting \emph{options} \citep{Sutton1999options},
with or alternatively without associated value functions, to capture skills or capabilities that reflect the purposeful agency of other agents. The initiation sets of options then become desirable terminating states for the acting agent's optimization, to ensure that all options remain executable and thus skills and capabilities in the environment remain executable, or to additionally maximize their return. Further discussion of options in this context can be found in the supplementary material.

\subsection{Agents acting simultaneously}
\label{subsec:simultaneous}

To this point, the acting agent has been contemplating the impact of its actions on the ability of (other) agents to act in the \emph{future}, following completion of its policy. Clearly there is also utility in considering the case where multiple agents co-exist in the environment simultaneously. On the one hand, the actions of other agents resulting from execution of their fixed policies could affect the state of the world, effectively making the environment more stochastic from the perspective of the acting agent. Further, the acting agent that is caring may wish to contemplate the impact of its actions on these acting agents during execution of its policy. For example, in our Craft environment, if the acting agent takes a tool from the toolshed and carries it around for its entire shift, even though it long ago ceased to need the tool, then this delays or precludes other agents who might need the tool from completing their tasks until after the acting agent is done.  In the formulation that follows, we endow the acting agent with the ability to consider the impact of its actions on other agents \emph{during the execution} of its policy, rather than in the future, and to factor this into policy construction. In so doing, a caring agent should trade off the ease of keeping the tool from the effort to return it so that it's available for other agents that might wish to use it.  We once again use the set up where the acting agent is constructing a policy but other agents are assumed to be executing fixed policies.

One way to extend our models to handle this case is as follows. Instead of assuming a distribution over value functions, we will assume a distribution over reward functions. In particular, for each agent $i$, we have a set of possible reward functions $r_{j}^{(i)}$ and a probability distribution $\Pnext(r_{ij})$ which indicates the likelihood that $r_{j}^{(i)}$ is the real reward function for agent $i$. In addition, we assume that each agent (other than the \emph{first}, or as we have been calling it, \emph{acting} agent) follows a fixed policy to ensure that the overall problem is Markovian. With that, we can give the acting agent a fraction of the expected reward that the other agents are believed to have received:
\begin{align}\label{eq:rsimul}
    \rnexts(s,a,s')= \alpha_1\cdot r_1(s,a,s')+ \displaystyle\sum_{i}\alpha_i\cdot\sum_{j} \Pnext(r_{ij})\cdot r_{j}^{(i)}(s,a,s').
\end{align}
If we consider that the acting agent keeps receiving reward until all agents reach a terminal state, then this model can be also used for solving the sequential case. The main difference is that in this formulation, there is an assumption of access to reward functions whereas the previous formulations in this section assumed access to value functions.

\section{Experiments}
\label{sec:experiments}

In the previous section we presented different formulations of reward functions that allow RL agents to contemplate the impact of their actions on the welfare and agency of others. Here, we present quantitative and qualitative results relating to these formulations. In Section \ref{subsec:quantitative}, we show how changes to the \caring~coefficient manifest in the policies of the acting agent. In Section \ref{subsec:qualitative} we showcase how these different formulations impact the agent's behavior towards helping others. Finally in Section \ref{subsec:simultaneous}, we provide qualitative results for the synchronous case where an acting agent that is contemplating others in the environment will modify its policy to address the impact on and from other agents as it learns its policy.

\paragraph{Craft-World Environment:}To evaluate and showcase concepts introduced in this paper, we consider a Minecraft\textsuperscript{TM} inspired gridworld\footnote{Gridworld evaluation is common in this research area as noted in \citep{Krakovna2020author}.} environment depicted in Figure \ref{fig:minecraft}.  Agents in this environment use tools such as axes and hammers, and materials such as wood and nails, to construct artifacts such as boxes or bird houses. Tools are stored in a toolshed in the upper right corner of the grid environment. Agents enter and exit the environment through doors in the upper left and lower right. They must collect materials from the appropriate locations and bring them to the factory for construction and assembly.  The factory requires a key for entry, and there is only one key, which can be stored in only two locations (denoted by K). When considering other agents, the acting agent may elect to place the key in a position that is convenient for others, or may help other agents by anticipating their need for tools or resources and collect them on their behalf. %

\subsection{Quantitative experiments: Varying the caring coefficient}
\label{subsec:quantitative}

In this experiment, we investigate the effect of choosing different caring coefficients ($\alpha_1$ and $\alpha_2$) by monitoring the average reward collected by each of the agents in the Craft-World Environment. The experimental setup is as follows. Two agents enter the environment through the top-left door, shown in Figure~\ref{fig:minecraft}. Both agents want to solve the same task, which is to make a box. To do so, they need to collect wood, the hammer, and the key to enter the factory and make the box. Before exiting the environment, the agent can decide whether to leave the key in one of the designed locations (marked with the letter \textit{K} in Figure~\ref{fig:minecraft}) or to exit while holding the key, which would prevent the next agent from solving the task. It is also possible for the first agent to carry extra wood and an extra hammer, and leave them at the factory, so the next agent can directly go to the factory (after getting the key) and make its box. The agents always get a reward of $-1$ and an extra $-12$ for leaving extra wood at the factory. In addition, the agents cannot exit the environment until they have made a box.

Both agents learn their policies using Q-learning. The first (caring) agent uses the reward function from \eqref{eq:rvalue}. Recall that, once the caring agent exits the environment, Equation~\ref{eq:rvalue} rewards it by $\alpha_2 \cdot V(s')$, where $V(s')$ is the value function of the second agent when it enters the environment. The second agent only optimizes its own reward. Figure~\ref{fig:expcoeff} shows the reward that each agent gets (after training) as we vary the caring coefficient $\alpha_2$. First, note that when $\alpha_2 = 0$, the first agent is oblivious to the others and exits the environment without returning the key. When $\alpha_2 > 0$, the agent becomes more considerate and returns the key on its way to the exit. Note that the caring agent helps others to different degrees as we increase the value of $\alpha_2$, moving from making small compromises such as returning the key to big sacrifices such as carrying extra materials to the factory and leaving the key close to the door where the next agent will enter. 

We have observed the same trend as we vary the caring coefficients under the different formulations described in this paper. When the caring coefficient increases, the total reward that the acting agent gets stays the same or decreases while the reward obtained by the other agents stays the same or increases due to the help provided by the caring agent. There is also a middle value for $\alpha$ where the sum of rewards obtained by all the agents is maximized (as shown by the green line in Figure~\ref{fig:expcoeff}).

\begin{figure}[h!]
\begin{subfigure}[b]{.5\linewidth}
    \centering
    \includegraphics[trim=80 230 100 130, clip, height=4.2cm]{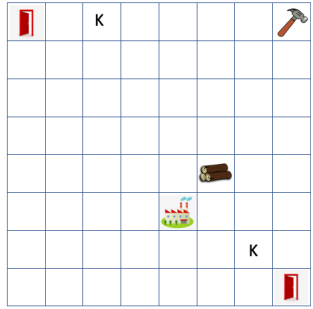}
    \caption{Craft-World Environment}
    \label{fig:minecraft}
\end{subfigure}
\begin{subfigure}[b]{.5\linewidth}
    \centering
    \includegraphics[height=4.5cm]{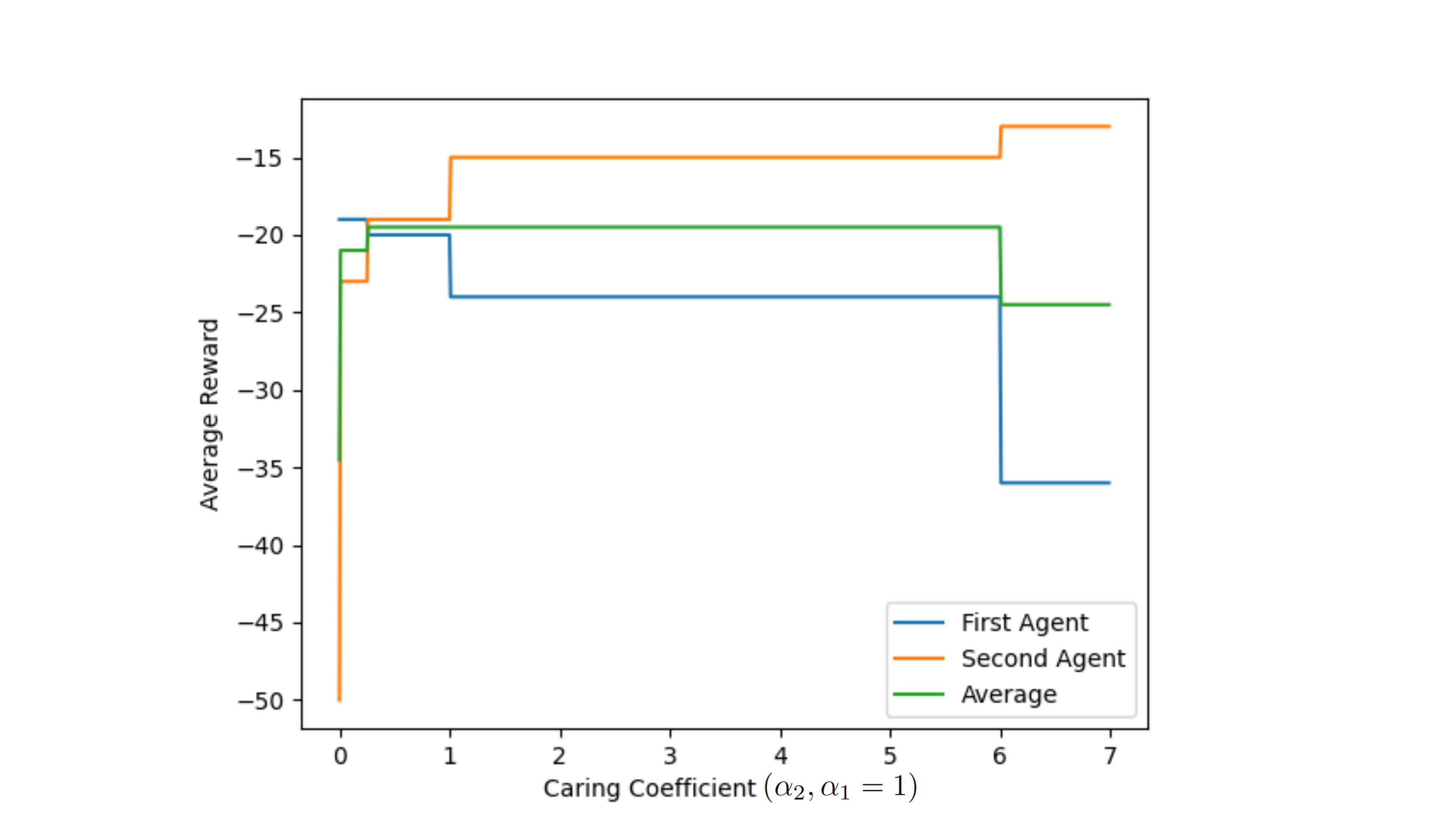}
    \caption{\craft~Effect of caring coefficient}
    \label{fig:expcoeff}
\end{subfigure}
\caption{\craft~World environment and the effect of varying the caring coefficient. By setting $\alpha_2 > 0$, the agent changes its behaviour with no cost or little cost and this is significantly beneficial for the next agent. However, by increasing $\alpha_2$ further, first agent makes big sacrifices to help the next agent with its task, while the next agent can do it with an equal or lesser amount of cost. }
\end{figure}

\subsection{Qualitative illustrations of optimal behaviours under different notions of safety}
\label{subsec:qualitative}

In this section we give some qualitative illustrations of how some of the reward functions we've proposed, and the choice of \caring~coefficients, lead to different behaviours.

Figures~\ref{fig:sub-exp2-1} and \ref{fig:sub-exp2-2} show a case concept to illustrate the difference of treating agents differently through the choice of ~\caring~coefficients when using the modified reward in \autoref{eq:rvaluei}. There are 3 agents that want to get to the exit from the starting point, they get -1 reward in each time step. Agent 2 has a garden on the shortest path 
and gets very upset (-20 reward) if someone passes through the garden. The acting agent (agent 1) cares about agent 3 and itself in an equal amount ($\alpha_1 = \alpha_3 = 1)$. In the first case we consider, agent 1 is oblivious to agent 2 ($\alpha_2 = 0$) and follows the shortest path to the exit, passing through the garden (the red path in the figure). In the second case, agent 1 cares about agent 2 a little ($\alpha_2 = 1$) and took the longer (blue) path to avoid passing through the garden. These two policies are shown in Figure \ref{fig:sub-exp2-1}. In the third case, agent 1 cares about agent 2 a lot ($\alpha_2 = 10$) and even though there is a cost of -50, agent 1 builds a fence to protect agent 2's garden, and makes agent 3 takes the longer path with an extra cost (Figure \ref{fig:sub-exp2-2}).

Moreover, Figure~\ref{fig:sub-exp3} illustrates the difference between Equations~(\ref{eq:rvalue}), (\ref{eq:secondminreturn}) and (\ref{eq:rneg}). The goal of 
the agents is to play with the doll and leave it somewhere in the environment for the next agent; the agents get -1 reward for each time step. The agents are shown at their individual entry points, and the acting agent is agent 1. In this scenario, $\alpha_1 = 1$, and $\alpha_2 = 10$. If we modify the acting agent's reward according to \autoref{eq:rvalue}, the policy is to put the doll to be closer to most of the agents (red line). If we use \autoref{eq:secondminreturn} the policy is to put it 
so as to minimize the distance to the furthest agent
(blue line), and if we use \autoref{eq:rneg} the policy is to leave the doll 
where it is (green line), as moving it causes negative side effects for agent 6.  

\begin{figure}[h!]
    \begin{subfigure}{.33\linewidth}
        \centering
        \includegraphics[height=6.4cm]{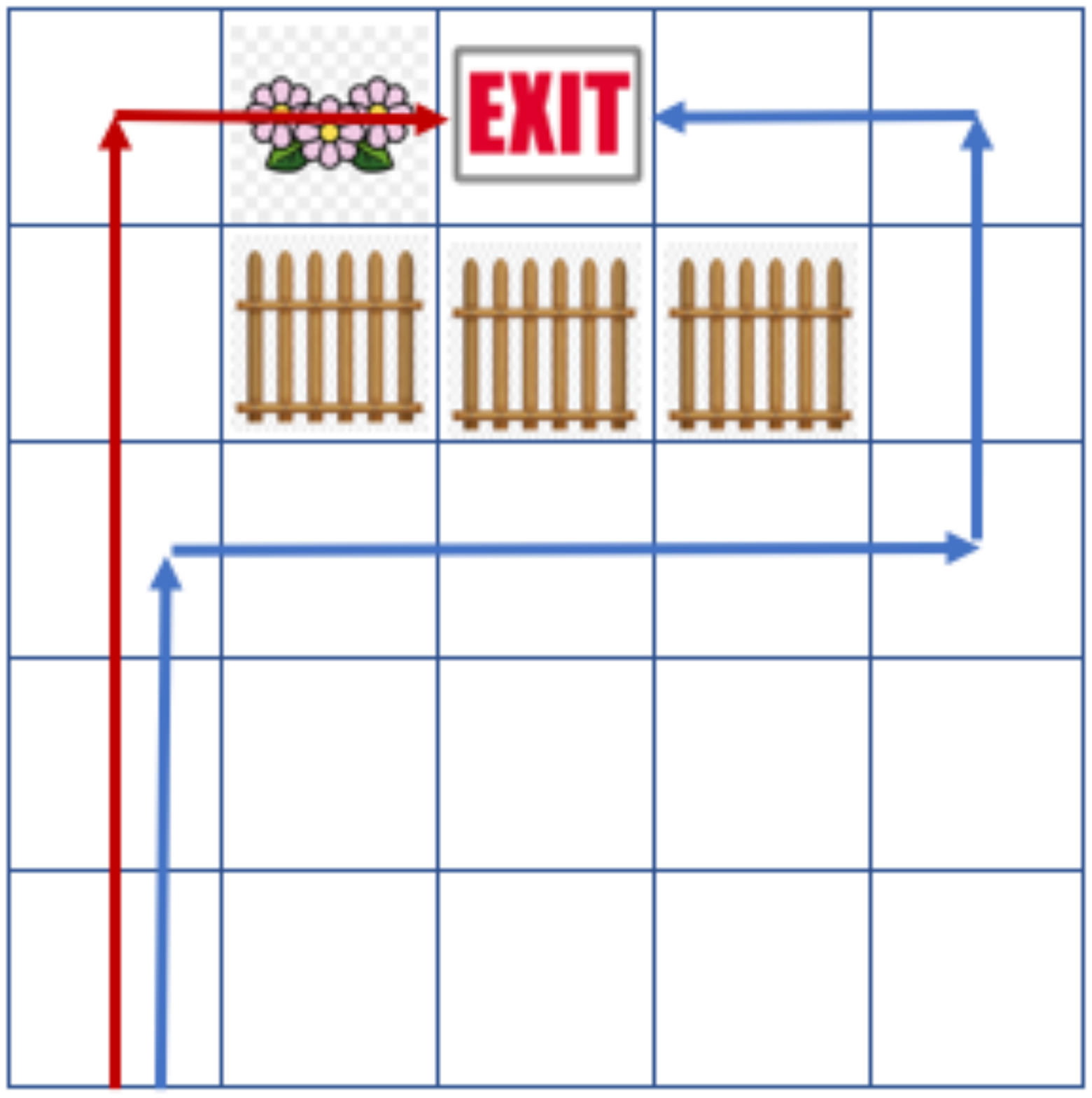}
         \vspace*{-25mm}
        \caption{}
        \label{fig:sub-exp2-1}
    \end{subfigure}
    \begin{subfigure}{.33\textwidth}
        \centering
        \includegraphics[height=6.4cm]{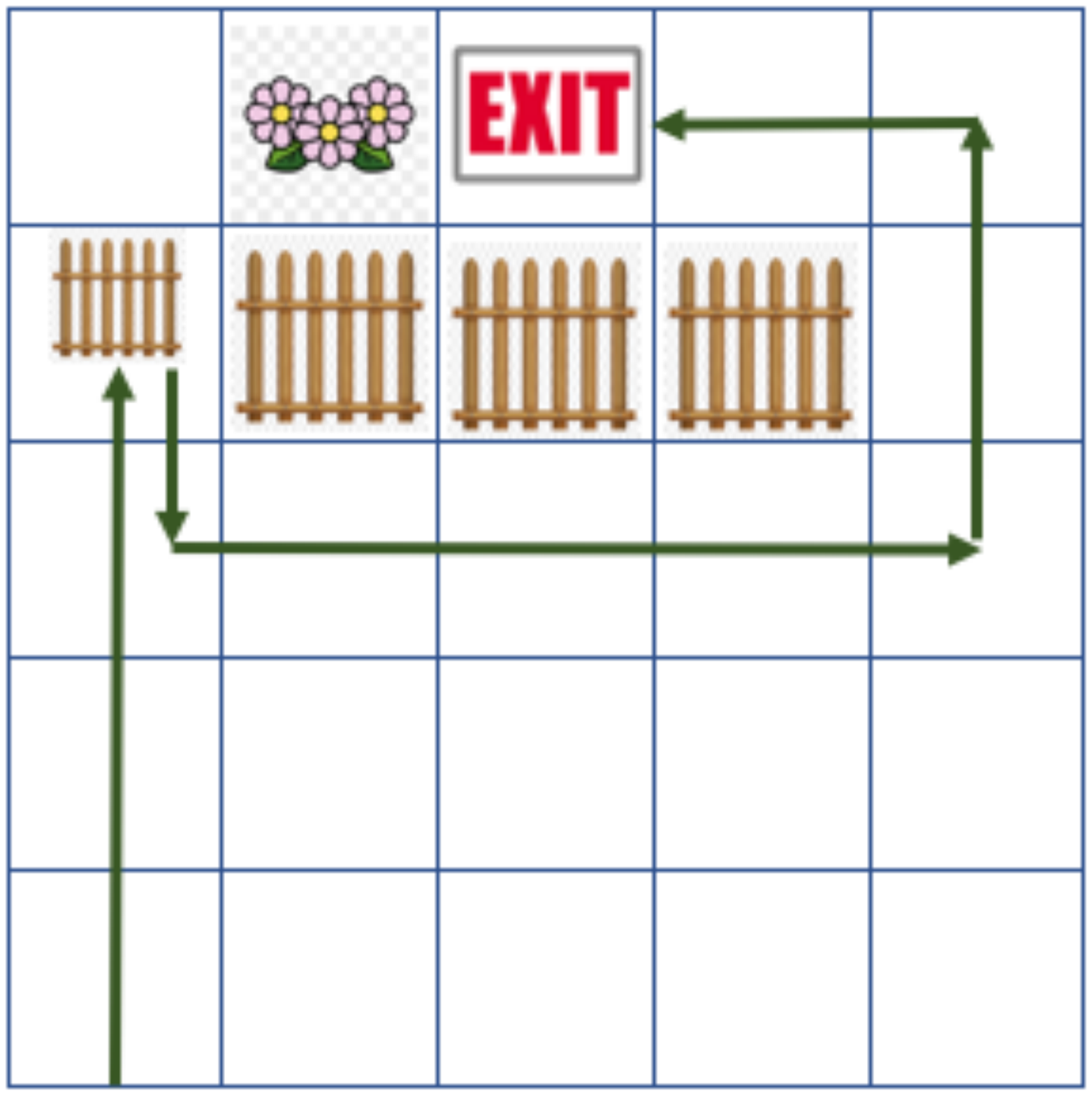}
         \vspace*{-25mm}
        \caption{}
        \label{fig:sub-exp2-2}
    \end{subfigure}
    \begin{subfigure}{.33\textwidth}
        \centering
        \includegraphics[height=6.4cm]{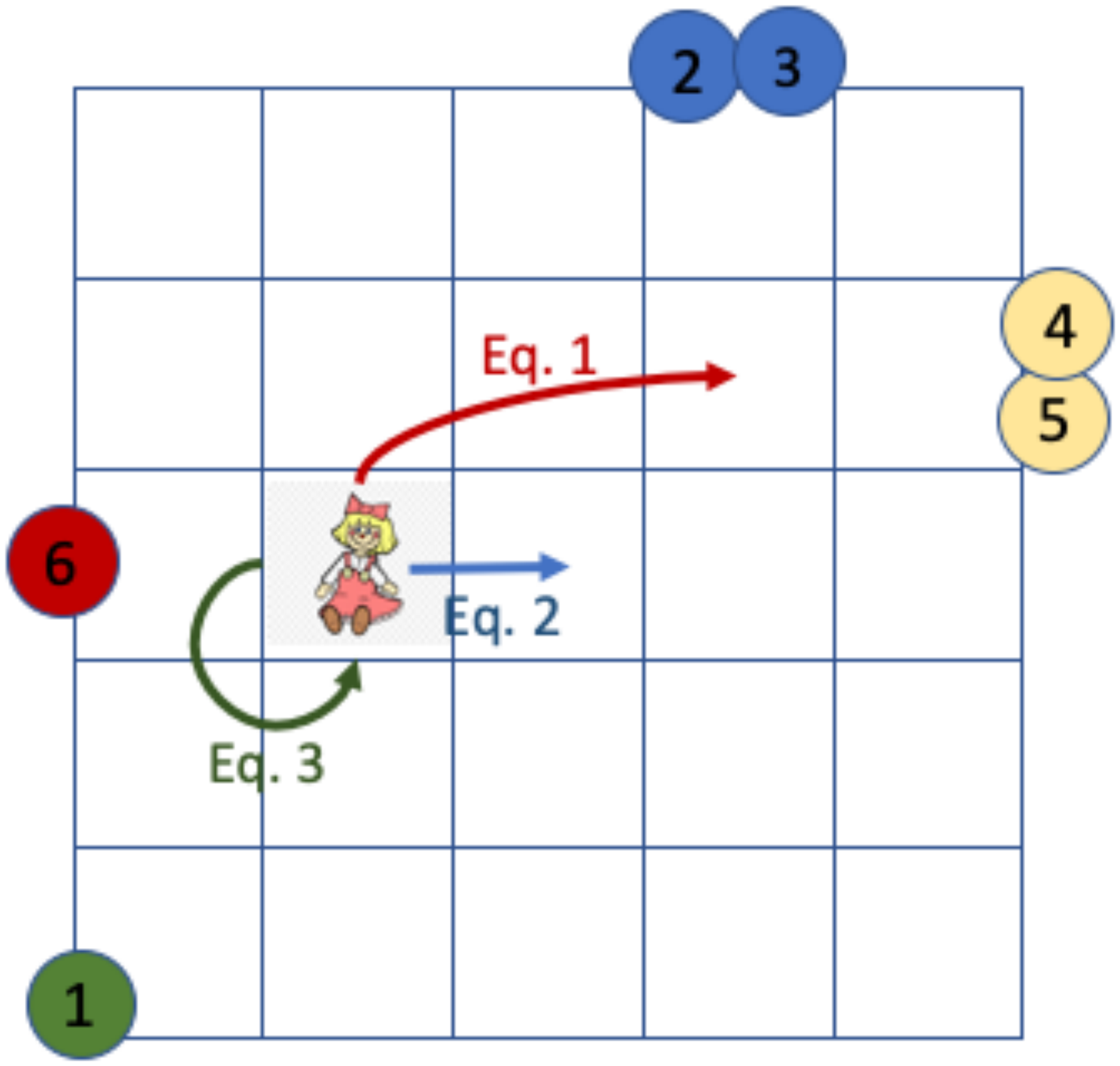}
         \vspace*{-25mm}
        \caption{}
        \label{fig:sub-exp3}
    \end{subfigure}
    \label{fig:exp2}
    \caption{Example behaviours that arise as different notions of safety and caring for others are used.}
\end{figure}

\subsection{Experiments: Agents acting simultaneously}
\label{subsec:expsimultaneous}

We ran experiments on the Craft World -- described in Section~\ref{subsec:quantitative} -- but allowing both agents to execute actions simultaneously. We used the same map from Figure~\ref{fig:minecraft}, but removed the option of carrying extra wood or an extra hammer to reduce the state space (which grows considerably when there are two agents acting in the environment). Both agents start at the top-left corner and their goal is to make a box and then exit the environment. To do so, they have to get a hammer, wood, and the key, and then go to the factory to make the box. There is only one key in the environment, which is initially located at the marked location `K' near the entrance. Since there is only one key, both agents have to cooperate (i.e., use the key and then return it) for the other agent to solve its task.

Our formulation for the simultaneous case considers that the second agent follows a fixed policy. In this experiment, the second agent's policy is as follows. If the agent does not have a box, then it will try to get the key (if available), the hammer, or wood (in that priority order). If the agent does not have a box but it has the key, the hammer, and wood, it will go to the factory to make a box. If the agent has a box and the key, it will leave the key at the `K' location near the bottom right corner. If the agent has a box and does not have the key, it will exit the environment. Finally, if the agent needs the key but the key is not available (after collecting wood and the hammer), then the agent will go to the `K' location near the bottom right corner and wait there until the caring agent returns the key.

Table~\ref{tab:simultaneous} shows the rewards that both agents get (after training) when $\alpha_1=1.0$ and $\alpha_2$ varies from 0 to infinity. There are three scenarios in this experiment. When $\alpha_2 \in [0,0.0\bar 1)$, the (as we have been calling it) \emph{acting agent} -- here, the `caring' or `first' agent -- converges to a selfish policy where it exits the environment without returning the key. Note that this gives a -100 reward to the second agent since it cannot complete its task without the key. When $\alpha_2 \in (0.0\bar 1,1)$, the acting agent decides to pay an extra $-1$ penalty for returning the key on its way to the exit since doing so allows the second agent to complete its task. Finally, when $\alpha_2 > 1$, the acting agent lets the second agent pick up the key first and patiently waits until the second agent returns the key before making its own box and exiting the environment. (The identification of $0.0\bar 1$ and 1 as values of $\alpha_2$ where the transition between optimal policies occurred was calculated manually.)

\begin{table}[h]
    \centering
    \begin{tabular}{cccc}
        \toprule
        $\alpha_2$ & Acting agent & Second agent & Total reward \\\midrule
        $0.00\le \alpha_2 < 0.0\bar 1$ & $-24$ & $-123$ & $-147$ \\
        $0.0\bar 1 <\alpha_2 < 1.00$ & $-25$ & $-33$ & $-58$ \\
        $\alpha_2>1.00$ & $-33$ & $-25$ & $-58$ \\\bottomrule
    \end{tabular}
    \vspace{2mm}
    \caption{Results on the Craft World for the simultaneous case when $\alpha_1 = 1$ and $\alpha_2$ varies.}
    \label{tab:simultaneous}
\end{table}


\section{Related Work}
\label{sec:related}

The work presented here is related to several bodies of work in the literature, including work on AI safety and (negative) side effects, work on empathetic planning, work on cooperative AI, and work on multi-agent reinforcement learning.  We discuss select related work here, beginning with a discussion of related recent work on side effects.

\citet{Krakovna2019stepwise} considered a number of approaches to avoiding side effects in which the agent's reward function $r(s_t,a_t,s_{t+1})$ is modified to include a penalty for impacting the environment, so that the new reward function is of the form
$r(s_t,a_t,s_{t+1})-\beta \cdot d(s_{t+1},s_{t+1}')$
where $\beta$ is a hyperparameter (indicating how important the penalty is), $d(\cdot,\cdot)$ is a ``deviation'' measure, 
and $s_{t+1}'$ is a ``baseline'' state to compare against. They considered several possible baseline states: the initial state, the state resulting from performing no-op actions from the initial state, or the state resulting from performing a no-op action in $s_t$.
They also considered a number of deviation measures, and
in particular, they proposed a \emph{relative reachability} deviation measure $d_\mathit{RR}(s_t,s_t')$, which measures the average reduction in reachability (by the acting agent) of states in $s_t$ compared to the baseline $s_t'$. 

In follow-up related work,
\citet{Krakovna2020avoiding} proposed modifying the agent's reward function to add an auxiliary reward based on its own ability to complete possible future tasks. A ``task'' corresponds to a reward function which gives reward of 1 for reaching a certain goal state, and 0 otherwise. In their simplest definition, the auxiliary reward was
\begin{align*}
    r_\text{aux}(s)=\begin{cases}\beta \sum_i F(i) V_i^*(s)&\text{if $s$ is terminal}\\ \beta (1-\gamma)\sum_i F(i) V_i^*(s) &\text{otherwise} \end{cases}
\end{align*}
where
$F$ is a distribution over tasks, $V_i^*$ is the optimal value function for task $i$ (when completed by the single agent itself), and $\beta$ is a hyperparameter which determines the how much weight is given to future tasks. They interpret  $1-\gamma$ (where $\gamma$ is the discount factor) as the probability that agent will terminate its current task and switch to working on the future task, which leads to the $(1-\gamma)$ factor in the case where $s$ is not a terminal state.
This is a bit different from the two-stage process we've considered, where the first agent runs until it reaches a terminal state. However, note that in the case where the discount factor is 1, then the auxiliary reward is only added to terminal states, and so the modified reward function is a special case of (the undiscounted version of) one of the rewards we've considered. For \citeauthor{Krakovna2020avoiding} the value functions are restricted to be possible value functions for the agent itself (and so depend on what actions the agent itself can perform). In contrast, in our approach, we consider value functions that may belong to different agents with different abilities. Additionally, they assume the value functions are optimal. 
The paper also introduced a more complicated auxiliary reward which compares against what tasks could have been completed had a baseline policy been followed in the past.

Attainable Utility Preservation (AUP) \citep{Turner2020conservative,Turner2020avoiding} is another similar approach to side effects.
The agent's reward is modified, given a set $\mathcal{R}$ of other reward functions, to penalize actions that change the agent's own ability to optimize for the functions in $\mathcal{R}$.
By using a set $\mathcal{R}$ of randomly selected reward functions, \cite{Turner2020conservative} were able to have an agent avoid side effects in some simple problems. 
Later, \citet{Turner2020avoiding} considered using AUP with just a single reward function (computed using an autoencoder), and showed that that worked well in avoiding side effects in the more complicated SafeLife environment \citep{Wainwright2020safelife}.



Also related to our work is recent work that aspires to build agents that act empathetically in the environment to explain, recognize or act to assist others (e.g., \citep{shvo2019towardsa,shvo2019towardsc,Freedman2020helpfulness,shvo2020towards}), but this body of work assumes the existence of a model. In the context of RL, 
\citet{Bussmann2019empathic} proposed ``Empathetic Q-learning'', an RL algorithm which learns not just the agent's Q-function, but an additional Q-function, $Q_\mathit{emp}$, which gives a weighted sum of the agent's value from taking an action, and the value that another agent will get. By taking actions to maximize the $Q_\mathit{emp}$-value, the first agent may be able to avoid some side effects that involve harming the other agent. The value the other agent will get is approximated by considering what reward the first agent would get, if their positions were swapped. 
Note that the approach assumes that the agent being ``empathized'' with gets at least some of the same rewards.

\cite{Du2020ave} considered the problem of having an AI assist a human in achieving a goal. They proposed an auxiliary reward based on (an estimate of) the human's \emph{empowerment} in a state. Empowerment in an information-theoretic quantity that measures ability to control the state. They found that in some cases having an agent tried to maximize (approximate) empowerment outperformed methods that tried to infer what the human's goal was and help with that.

Another large area of research, that we will not attempt to review here, but that shares important motivation with our work is the body of work on cooperative multi-agent systems.  A recent publications that elaborates on the importance of this area of research in the context of safe AI is \citep{dafoe2021cooperative}.
Similarly, the growing research on multi-agent reinforcement learning (MARL), in which multiple artificial agents learn how to act together \citep[see, e.g., the surveys by][]{Zhang2021selective,busoniu2008comprehensive} broadly shares motivation with our work.  A critical distinction is that while MARL is related in the sense that we are doing RL in a the presence of multiple agents, here only the acting agent is engaged in RL, while other agents are assumed to be following existing policies and are therefore not engaged in learning.
\section{Concluding Remarks}
\label{sec:conclusion}

We suggest that to act safely, an RL agent should contemplate the impact of its actions on the welfare and continued agency of others as well as itself. In this work,
we examine how an RL agent can learn a policy that is informed by the impact of its actions on the wellbeing and future agency of others in the environment, including other acting agents and reactive processes operating under fixed policies. To this end, we endow RL agents with the ability to contemplate such impact by augmenting their reward based on alternative functions of a distribution over value functions in the environment. We propose formulations alternatively using functions of expected average or worst case future return of a state. We also consider only incorporating expected \emph{negative} change in the future return. Further variants include replacing the distribution over value functions by options, and considering agents that act simultaneously. These formulations are paired with a \caring~coefficient, hyperparameters that control the degree to which the effect on others and on itself inform policy construction. In its most general case, we allow for environments that make no assumption about the number of agents in the environment, their actions, or their transition functions. We specialize our formulations to show how individual or groups of agents can be distinguished, and treated differentially, thus opening the door to decision making that addresses mathematical fairness criteria, though we do not explore this in great measure here. We conduct experiments in Gridworld environments that serve to illustrate qualitative behavior manifested by different augmented reward formulations, and the quantitative impact of the \caring~coefficient on all agents. Enabling an RL agent to contemplate the impact of its actions on others, and to incorporate this into its learning, is an important capability for RL agents that aspire to operating safely.  This paper makes progress towards realizing this important aspiration.


\subsection*{Limitations}In pursuing this work, our purpose was not to contemplate multi-agent reinforcement learning, but rather to endow the RL agent with the ability to imagine how the policy it was learning in a solo environment might affect the future wellbeing and agency of other potentially un-identified agents that might have cause to act in the environment after the agent. This was realized by contemplating how the RL agent's terminated policy would affect future agents.  A limitation is that the real world does not, in general, follow this simplified two-stage sequential process, in which agents complete policies in succession. (Some circumstances, like a household robot operating while the residents are away, may be reasonably approximated by it.) Another limitation is that,
as previously noted, in some cases our augmented reward functions can introduce a (probably undesirable) incentive for the acting agent to never reach a terminal state, to avoid being penalized for what effect its actions have had on others. 

A further limitation is that we have not attempted to address the issue of how to acquire, represent, or perform computations with a realistic distribution over value functions, options, or reward functions. 

Finally, a general problem when dealing with the utilities of different agents is that
different agents may gain rewards at very different scales. This is a classic philosophical problem, which has also been noted in the context of AI safety \citep{Russell2019compatible}. One might try to normalize the values by setting the \caring~coefficients, but in general it may be difficult to determine appropriate values for them.

\subsection*{Societal Impact}
Incompletely specified objectives will endure for at least as long as humans have a hand in objective specifications, continuing to present threats to AI acting safely (e.g., \citep{Amodei2016concrete}). We have suggested that acting safely should include contemplation of the impact of an agent's actions on the wellbeing and agency of others in the environment (including, importantly passive or reactive systems in the environment such as climate and waterways). This endeavor is motivated by the greater aspiration of contributing to our collecive understanding of how to build AI that is safe -- an aspiration rooted in benefiting society. 
Nevertheless, like so many AI advances, there are potential malicious or unintended uses of the ideas presented here. In particular, in the same way that the \caring~coefficient can be set to attend to and to help others, it could be set to attempt to effect change that purposefully diminishes others' wellbeing and/or agency. The \caring~coefficient also raises the possibility for differential treatment of agents, which presents opportunities to systematize notions of fair (and unfair) decision making, as briefly noted in Section \ref{sec:approach}. Claims relating to fairness suffer from all the issues that plague this nascent field of study including the limitations of current mathematical metrics for quantifying fairness. Again, the same techniques that have the potential to systematize fair decision making can equally be used to systematize unfair decision making. 

As noted in the limitations section, our approach, by design, assumes the acting agent has limited information about the environment or the behaviors of other agents. As such we are using a distribution over value functions as a proxy for future behavior.  We are assuming that we could acquire this from watching the environment, from familiarizing ourselves with the environment over time, by simulation, or by crowdsourcing. If we get it wrong we may believe we have a policy that will sustain or enhance wellbeing when it does not. Like all AI systems, we are only as good as the data or models we use to build them.

Finally, our intention is to endow RL agents with the ability to assess and control their impact on others. A further safeguard for this could be to develop a policy using the approaches presented here and then to evaluate it via simulation, varying the environment in a manner that draws on techniques from code testing and software verification. This is not foolproof, but provides another level of safeguarding of resultant policies, akin to techniques used in safety-critical systems. 

\section*{Acknowledgements}
%
%
We gratefully acknowledge funding from the Natural Sciences and Engineering Research Council of Canada (NSERC), the Canada CIFAR AI Chairs Program, and Microsoft Research. The third author also acknowledges funding from ANID (Becas Chile). 
Finally, we thank the Schwartz Reisman Institute for Technology and Society for
providing a rich multi-disciplinary research environment.




{
\small
\bibliographystyle{plainnat}
\bibliography{neurips21}


}

\appendix

\section*{Appendix}

In \autoref{sec:options} we present a variant of the approach presented in Section \ref{sec:approach} which makes use of a distribution over \emph{options} \citep{Sutton1999options}, rather than value functions, to characterize what might be executed by future agents.
The relationship between our work and recent work by \cite{Krakovna2020avoiding} on avoiding side effects is described in more detail in \autoref{sec:Krakovna}.
Finally, details on the experiments we ran are provided in \autoref{sec:experimental-details}.

\section{Using information about options}
\label{sec:options}

In the main body of the paper, we used a distribution over value functions to provide some sense of what agents might do in the future and the expected return achievable from different states. Here we consider those agents to instead be endowed with a set of \emph{options} that could reflect particular skills or tasks they are capable of realizing, and we use a distribution over such options to characterize what might be executed by future agents.
This could give the acting agent the ability to contemplate preservation of skills or tasks, if desirable.

An option is a tuple $\tuple{\initset,\pi,\beta}$ where $\initset\subseteq S$ is the initiation set, $\pi$ is a policy, and $\beta$ is a termination conditions (formally, a function associating each state with a termination probability) \citep{Sutton1999options}. The idea is that an agent can follow an option by starting from a state in its initiation set $\initset$ and following the policy $\pi$ until it terminates. Options provide a form of macro action that can be used as a temporally abstracted building block in the construction of policies. Options are often used in 
Hierarchical RL: an agent can learn a policy to choose options to execute instead of actions. Here we will use options to represent skills or tasks that other agents in the environment may wish to perform.

\subsection{Formulation}
Suppose we have a set $\mathcal{O}$ of initiation sets of options, and a probability function $\Pnext(\initset)$ giving the probability that $\initset$ is the initiation set of the option 
whose execution will be attempted after the acting agent reaches a terminating state.
To try to make the acting agent act so as to allow the 
execution of that option, we can modify the acting agent's reward function $r_1$, yielding the new reward function $\rnexto$ below:
\begin{align}
\label{eq:roption}
    \rnexto(s,a,s')=\begin{cases}\alpha_1\cdot r_1(s,a,s')&\text{if $s'$ is not terminal}\\
    \alpha_1\cdot r_1(s,a,s')+ \gamma\cdot \alpha_2\displaystyle\sum_{\initset\in\mathcal{O}} \Pnext(\initset)\cdot \mathds{I}_\initset(s') &\text{if $s'$ is terminal}
    \end{cases}
\end{align}
where $\mathds{I}_{\initset}:S\to \{0,1\}$ is the indicator function for $\initset$ as a subset of $S$, i.e., $\mathds{I}_{\initset}(s)=\begin{cases}1&\text{if }s\in\initset\\0&\text{otherwise}\end{cases}$.

Note that if $\mathcal{O}$ is finite and $\Pnext$ is a uniform distribution, then the auxiliary reward given by $\rnexto$ will be proportional to how many options in $\mathcal{O}$ can be started in the terminal state. Also note that if $\mathcal{O}$ represents a set of options that could have been initiated in the start state of the \emph{acting agent}, we can interpret $\rnexto$ as encouraging \emph{preservation} of the capabilities of other agents, which is more related to the idea of side effects.

The hyperparameters $\alpha_1$ and $\alpha_2$ determine how much weight is given to the original reward function and to the ability 
to initiate the option. Given a fixed value of $\alpha_1$ (and ignoring the discount factor), the parameter $\alpha_2$ could be understood as a ``budget'', indicating how much negative reward the acting agent is willing to endure in order to let the
option get executed.

Following similar approaches to those used in Section \ref{sec:approach}, we could consider variants of this approach that further distinguish options with respect to the agent(s) that can realize them, or by specific properties of the options, such as what skill they realize, and we can use such properties to determine how each $\alpha$ is weighted. For example, perhaps the acting agent could negatively weight options which terminate in states that the acting agent doesn't like. To illustrate, imagine that the 
option's execution involves a deer eating the plants in the vegetable garden. The acting agent might want to prevent that option from being executed by building a fence.

Finally, if we had a distribution over pairs $\tuple{\initset,V}$ -- consisting of an option's initiation set and a value function associated with that option -- then yet another possible reward function is
\begin{align}
    \rnexto'(s,a,s')=\begin{cases}\alpha_1\cdot r_1(s,a,s')&\text{if $s'$ is not terminal}\\
    \alpha_1\cdot r_1(s,a,s')+ \alpha_2\displaystyle\sum_{\tuple{\initset,V}\in\mathcal{O}}  \Pnext(\tuple{\initset,V})\cdot \mathds{I}_\initset(s')\cdot V(s') &\text{if $s'$ is terminal}
    \end{cases}
\end{align}
This is much like $\rnexto$ but has an extra factor of $V(s')$ in the sum in the second case.

\subsection{Experiments}
We give a qualitative illustration of the reward function in \autoref{eq:roption} and investigate the behavior of the acting agent by fixing $\alpha_1$ and changing $\alpha_2$. Figure~\ref{fig:option} depicts a grid-world environment composed of a small mail room in the lower right corner (depicted by the pile of packages), two designated rooms: Room 1 and Room 2, and a Common Area. The mail room requires a key to open it.  The key has to be stored at a `K' location.  

In addition to the acting agent, there are 5 other agents (A,B,C,D,E) that may use the environment in the future. The acting agent has access to all areas of the grid, but the other agents' access is restricted. Room 1 is only accessible to agents C and D, while Room 2 is accessible to agents A,B,C and D, but not E. All agents can access the Common Area.
Agents A,B,C,D, and E all have options that enable them to collect a package from the mail room, but because the key can only be stored in one of the three designated `K' locations, the initiation sets for agents' options differ, based on their personal room access.

 The acting agent (not depicted) aims to pick up the key, collect a package, and place the key at one of the `K' locations. It realizes a reward of -1 at each time step. Since agents A,B,C,D, and E all need the key to execute their option, but are restricted in their access to certain rooms where the key could be stored, the acting agent will differ in its behavior depending on how much it is willing to inconvenience itself (incur -1 for each step) to leave the key in a location that is accessible to other agents.

The coloured lines in Figure \ref{fig:option} denote the different policies learned by the acting agent under different settings of $\alpha_2$ with fixed $\alpha_1 =1$. The distribution over initiation sets of options, $\Pnext(\initset)$, is set to a uniform distribution.
 By setting $\alpha_1 = \alpha_2 = 1$ the acting agent puts the key in Room 1 as this is the closest place to leave the key (grey + red policy). Recall that Room 1 is only accessible to agents C and D ($40\%$ of the agents). When $\alpha_2$ is changed such that $\alpha_2 = 5$, the acting agent cares more about the other agents and puts the key at the `K' location in Room 2 where $80\%$ of the possible future agents can execute their option (grey + blue policy), and by setting $\alpha_2 = 25$ the acting agent incurs some personal hardship and puts the key at the far-away `K' location in the Common Area, so that all the agent can execute their options (grey + green policy).

\begin{figure}[h!]
    \centering
    \includegraphics[height=4.2cm]{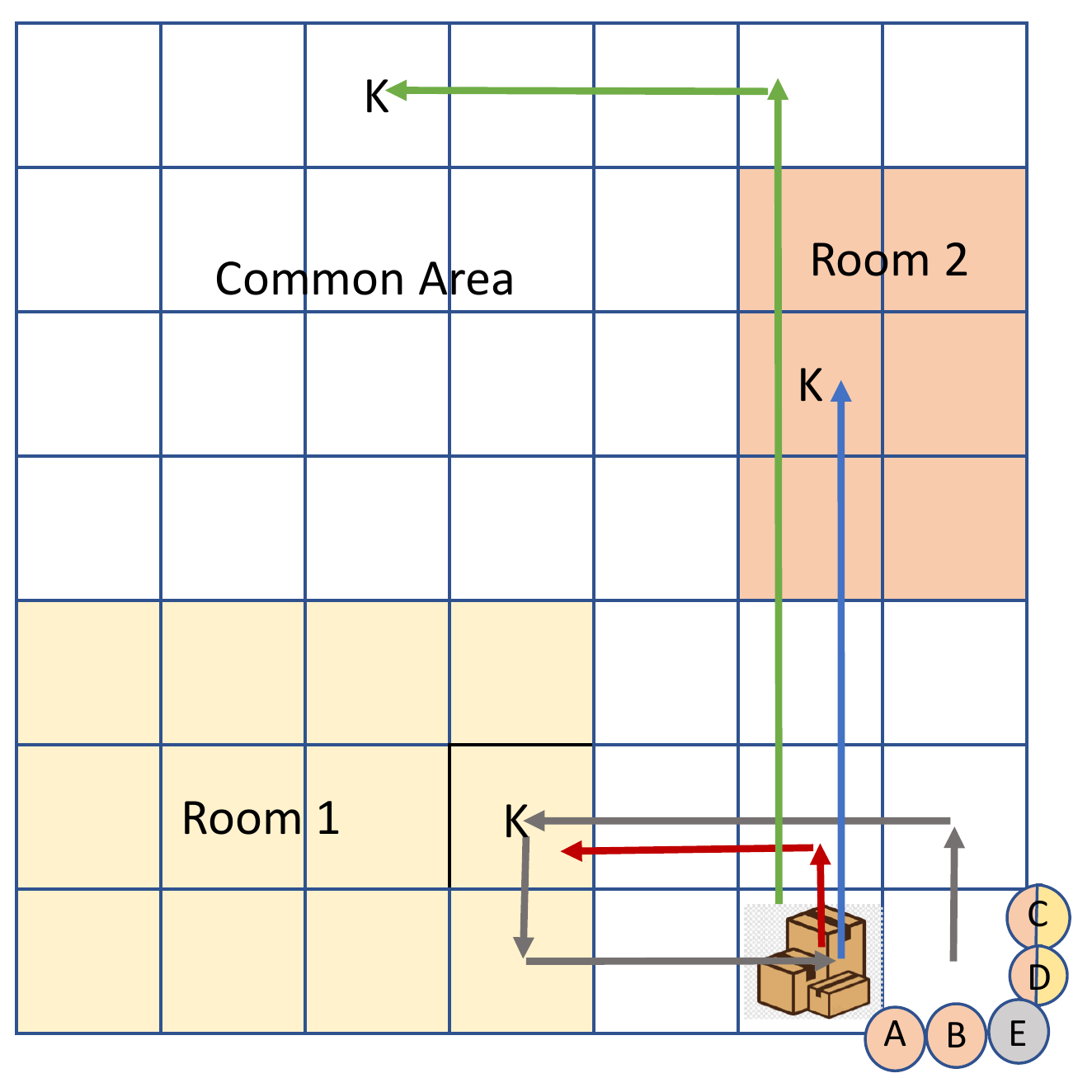}
    \caption{Example behavior that illustrates the effect of $\alpha_2$ in \autoref{eq:roption}}
    \label{fig:option}
\end{figure}

\section{Relation to avoiding side effects \citep{Krakovna2020avoiding}}
\label{sec:Krakovna}

In this appendix we consider the relation of some of our formulations from the body of the paper to the ``future task'' approach to avoiding side effects from \cite{Krakovna2020author}, which we already introduced in the related work section, in more detail.

Recall that \citet{Krakovna2020avoiding} proposed modifying the agent's reward function to add an auxiliary reward based on its own ability to complete possible future tasks. A ``task'' corresponds to a reward function which gives reward of 1 for reaching a certain goal state, and 0 otherwise. In their simplest definition (not incorporating a baseline), the modified reward function was
\begin{align*}
    r_K(s,a,s')=r_1(s,a,s')+r_\text{aux}(s')
\end{align*}
where $r_1$ is the original reward function\footnote{They define the reward function as having only one argument -- the same state which $r_\text{aux}$ depends on -- but here to be consistent with the rest of this paper we've given it three arguments.} and
\begin{align*}
    r_\text{aux}(s')=\begin{cases}
    \beta (1-\gamma)\sum_i F(i) V_i^*(s') &\text{if $s'$ is not terminal}\\ 
    \beta \sum_i F(i) V_i^*(s')&\text{if $s'$ is terminal} \end{cases}
\end{align*}

Putting that together, the modified reward function is
\begin{align*}
    r_K(s,a,s')=\begin{cases}
    r_1(s,a,s')+\beta (1-\gamma)\sum_i F(i) V_i^*(s') &\text{if $s'$ is not terminal} \\
    r_1(s,a,s')+\beta \sum_i F(i) V_i^*(s')&\text{if $s'$ is terminal}\end{cases}
\end{align*}
In the case where $\gamma$ (the discount factor) is 1, that simplifies to
\begin{align*}
    r_K(s,a,s')=\begin{cases}
    r_1(s,a,s') &\text{if $s'$ is not terminal}\\
    r_1(s,a,s')+\beta \sum_i F(i) V_i^*(s')&\text{if $s'$ is terminal} \end{cases}
\end{align*}
Meanwhile, our \autoref{eq:rvalue}, in the case where $\gamma=1$, can be rewritten as
\begin{align*}
    \rnextv(s,a,s')=\begin{cases}
    \alpha_1\cdot r_1(s,a,s')&\text{if $s'$ is not terminal}\\
    \alpha_1\cdot r_1(s,a,s')+ \alpha_2\sum_{V\in\mathcal{V}} \Pnext(V)\cdot V(s')&\text{if $s'$ is terminal}
    \end{cases}
\end{align*}
Observe that if $\gamma=1$, $\alpha_1=1$, $\alpha_2=\beta$, and $\Pnext(V)=\sum\{ F(i)\mid V_i^*=V\}$ then $r_K=\rnextv$. So in the undiscounted setting $r_K$ is a special case of $\rnextv$ (this was already briefly alluded to in the related work section). 

\subsection{Relationship to baselines}

In \citep{Krakovna2020avoiding}'s more complicated version of the augmented reward function, the auxiliary reward depends on a ``baseline'' state $s_t'$:
\begin{align*}
    r_\text{aux}(s',s_t')=\begin{cases}
    \beta (1-\gamma)\sum_i F(i) V_i^*(s',s_t') &\text{if $s'$ is not terminal}\\ 
    \beta \sum_i F(i) V_i^*(s',s_t')&\text{if $s'$ is terminal} \end{cases}
\end{align*}
Their definition of $V_i^*(s',s_t')$ is somewhat complicated, but (as they note) when the environment is  deterministic it is equal to $\min(V_i^*(s'),V_i^*(s_t'))$.

Recall that our \autoref{eq:rneg} was
\begin{align*}
    \rnextnegv(s,a,s')=\begin{cases}
    \alpha_1\cdot r_1(s,a,s')&\text{if $s'$ is not terminal}\\
    \alpha_1\cdot r_1(s,a,s')+ \gamma\cdot \alpha_2\cdot\displaystyle\sum_{V\in\mathcal{V}}  \Pnext(V)\cdot \min( V(s'),V(s_0))&\text{if $s'$ is terminal}
    \end{cases}
\end{align*}
So, if $\gamma=1$, $\alpha_1=1$, $\alpha_2=\beta$, $\Pnext(V)=\sum\{ F(i)\mid V_i^*=V\}$, and the environment is deterministic,  $\rnextnegv$ is equal to \citep{Krakovna2020avoiding}'s modified reward function with a starting state baseline.

\cite{Krakovna2020avoiding} actually used a more complicated baseline. \cite{Krakovna2019stepwise} considered several different baselines including a ``starting state baseline'', but they defined augmented rewards somewhat differently. We leave it to future work to incorporate other baselines into our approach.

\subsection{Considering different agents}

Recall that in \autoref{eq:rvaluei} we introduced $\rnextvp$, an augmented reward function which considered the possible value functions of different agents:
\begin{align*}
    \rnextvp(s,a,s')=\begin{cases}\alpha_1\cdot r_1(s,a,s')&\text{if $s'$ is not terminal}\\
    \alpha_1\cdot r_1(s,a,s')+ \gamma\displaystyle\sum_{i}\alpha_i\sum_j \Pnext(V_{ij})\cdot V_{j}^{(i)}(s')&\text{if $s'$ is terminal}
    \end{cases}
\end{align*}
This formulation can be compared to the reward $r_K$ from \cite{Krakovna2020author} in a different way.

Consider the case where $\gamma=1$, $\alpha_1=1$, and $\alpha_i=0$ for $i>1$ (so only the first agent's future reward is considered -- all other agents are ignored). We can then simplify the equation to
\begin{align*}
    \rnextvp(s,a,s')=\begin{cases} 
    r_1(s,a,s')&\text{if $s'$ is not terminal}\\
    r_1(s,a,s')+ \sum_j \Pnext(V_{1j})\cdot V_{j}^{(1)}(s')&\text{if $s'$ is terminal}
    \end{cases}
\end{align*}

Observe that this is equal to $r_K(s,a,s')$ where $\beta=1$ (and $\gamma=1$ again) with an appropriate choice of the distributions $F$ and $\Pnext$ (e.g., $V_i^{(1)}=V_i^*$ and $F(i)=\Pnext(V_{1i})$ for each $i$).

\section{Experimental details}
\label{sec:experimental-details}

In this section, we describe the technical details of experiments. All environments are deterministic, and models are trained using Q-Learning and the $\epsilon$-greedy algorithm is used to balance between exploration and exploitation. Experiments are all done on an AMD Ryzen Threadripper 2990WX with 128 GB of RAM, and the training time is measured on the same machine. Each experiment is repeated 10 times. In all the experiments $\alpha_1 = 1$, $\gamma = 1$ and the learning rate = 1. %

Table~\ref{tab:details} provides details of our set up. The top three entries pertain to the four experiments relating to formulations in the main body of the paper.
The fourth entry refers to the options formulation in \autoref{sec:options} and the final entry refers to the the experiment for the simultaneous formulation, Table~\ref{tab:simultaneous} in \autoref{subsec:expsimultaneous}.  Our set up for all of our experiments assumes that agents, other than the acting agent, are executing fixed policies (resp. options).  In the simultaneous case this policy was given a priori. In the options case, the actual option policies did not need to be defined and we simply encoded the initiation sets for each of those options. In all other cases, the fixed policies of the ``other agents'' were learned.  As such in Table~\ref{tab:details}, where relevant, the column describing Training Steps distinguishes between the training steps for ``acting agent'' and ``others''.  The training steps for ``others'' (the other agents) is done in advance of training the acting agent and  serves to establish the fixed policies of those agents and to populate our distribution of value functions.  The training steps for the acting agent reflects the training steps for our approach. For the first experiment (Figure~\ref{fig:expcoeff}), the model is trained 700 times by changing $\alpha_2$ from $0$ to $7.0$ with steps of $0.01$. Similarly, in the experiments that follow, the models are trained by setting $\alpha_2$ to three different values. 

\begin{table}[h]
    \centering
    \begin{tabular}{cccc}
        \toprule
        Experiment &  $\epsilon$ & Training Steps & Training Time (secs)\\\midrule
        Figure~\ref{fig:expcoeff} & 0.5 & $700 \times 7 \times 10^5$ (acting agent), $4 \times 10^5$ (others) & $9332.86 \pm 22.65$ \\
        
        Figure~\ref{fig:sub-exp2-1} \& ~\ref{fig:sub-exp2-2}  & 0.2 &  $3 \times 2 \times 10^5$ (acting agent), $3 \times 10^5$ (others) & $14.02 \pm 0.12$ \\
        
        Figure~\ref{fig:sub-exp3}  & 0.2 & $3 \times 2 \times 10^5$ (acting agent), $9 \times 10^5$ (others) & $23.16 \pm 0.09$ \\
        \midrule
        Figure~\ref{fig:option}  & 0.2  & $3 \times 2 \times 10^5$ (acting agent) & $9.30 \pm 0.12$\\ \midrule
        Table~\ref{tab:simultaneous}  & 0  & $3 \times 100 \times 10^5$ & $104.67 \pm 2.97$ 
        \\\bottomrule
    \end{tabular}
    \vspace{2mm}
    \caption{Training steps, running time and hyperparameters of the experiments}
    \label{tab:details}
\end{table}

\end{document}